\documentclass[11pt]{article}
\usepackage[margin=1in]{geometry}
\usepackage[T1]{fontenc}
\usepackage{lmodern}
\usepackage{amsmath,amssymb,amsthm}
\usepackage{booktabs}
\usepackage{graphicx}
\usepackage[numbers]{natbib}
\usepackage{microtype}
\usepackage[hidelinks]{hyperref}
\usepackage{xurl}
\newtheorem{theorem}{Theorem}

\newtheorem{corollary}{Corollary}
\title{Attention-Weighted Value Projection for KV-Cache Compression}
\author{Samuel Salfati\\fraQtl AI Research\\\texttt{samuel@fraqtl.ai}}
\date{September 2026}
\begin{document}
\maketitle

\begin{abstract}
Rank reduction discards dimensions; quantization keeps them at lower
precision. Comparing the two requires a choice of what compression
should preserve. For attention values, we study reconstruction of the
attention output rather than reconstruction of the values alone. With
fixed attention weights, the optimal orthogonal rank-$r$ projection
uses the leading eigenvectors of $V^\top\alpha^\top\alpha V$, and its
error is exactly the discarded eigenvalue sum. We extend this objective
to calibration datasets and grouped query attention, and describe rank
allocation under an additive local error budget. We also examine the
limits of using local error to predict downstream loss. Historical
weight-perturbation experiments favor coefficient rounding over the
value projections tested, but do not establish a comparison at equal
cache storage. The resulting distinction is practical: the projection
objective has an exact solution, while a comparison with cache
quantization requires separate activation-level experiments.
\end{abstract}

\section{Introduction}
The KV cache stores key and value activations for all previous tokens
during transformer inference. For $L$ layers, $H$ KV heads, head
dimension $d$, and sequence length $T$, it contains $2LHdT$ elements
when keys and values have equal dimensions. Two strategies for reducing
this cost are \textit{rank reduction}, which projects activations to
fewer dimensions, and \textit{quantization}, which represents them at
lower precision.

The choice of subspace matters. Uncentered PCA preserves directions with high
energy in $V$, but the next operation consumes $\alpha V$. A direction
that contributes little to the values may matter differently after
attention mixes tokens. The natural local question is therefore:
which rank-$r$ value projection best preserves the attention output?

For fixed attention weights, this question has an exact answer. The
right basis comes from $V^\top\alpha^\top\alpha V$. This is uncentered PCA on the
attention output $\alpha V$, not on $V$ itself. Its eigenvalue tail
measures the local error at each rank. The same construction applies
when several query heads share a value head, provided their individual
output errors are added before choosing the basis.

We develop this result, its calibration and rank-allocation forms,
and its limits. We then return to the comparison with quantization.
The available experiments perturb projection weights and reconstruct
dense floating-point weights. Their perplexities describe those
interventions; they cannot establish which cache codec gives better
quality at equal storage. Keeping this distinction explicit leaves a
well-defined projection result and an empirical question that still
requires a cache-level comparison.

\paragraph{Version note.}
This version replaces the general downstream-optimality and
matched-cache-storage claims of arXiv:2604.11501v1 with the local
result and qualified observations below. Appendix~\ref{app:corrections}
details the changes. No new model evaluations are reported.

\section{Background and Related Work}
KIVI uses asymmetric cache quantization, including different grouping
for keys and values~\citep{kivi}; a simple weight-rounding baseline
is not an implementation of KIVI. Palu includes low-rank projection,
rank search, quantization compatibility, and optimized execution
\citep{palu}. CSKV combines compressed features with a window-based
cache~\citep{cskv}, and ReCalKV studies head reordering and offline
calibration~\citep{recalkv}. These methods are outside the scope of a
comparison against one fixed per-head projection implementation.

KQ-SVD also considers value/output interactions. Its value objective
optimizes a factorization $VAB^\top W^O$, not merely an orthogonal
projector $VPP^\top W^O$~\citep{kqsvd}. Substituting the latter
constraint does not evaluate the full method. The V theorem here
uses a different local objective, $\alpha V$, and does not establish
superiority to KQ-SVD. These objectives and feasible sets should be kept distinct when
comparing compression methods.

\section{Attention-Weighted Value Projection}
\label{sec:theorem}
For calibration sequence $s$, let $V_s\in\mathbb R^{T_s\times d}$.
Let $\alpha_{s,h}\in\mathbb R^{T_s\times T_s}$ be the fixed attention
matrix for query head $h$ sharing this value head. Choose fixed
weights $w_{s,h}\geq0$ and an orthogonal projector $\Pi=PP^\top$, with
$P\in\mathbb R^{d\times r}$ and $P^\top P=I_r$. Define
\begin{align}
 D(P)&=\sum_{s,h}w_{s,h}\|\alpha_{s,h}V_s(I-PP^\top)\|_F^2,\label{eq:objective}\\
 M_V&=\sum_{s,h}w_{s,h}V_s^\top\alpha_{s,h}^\top\alpha_{s,h}V_s.
 \label{eq:gram}
\end{align}
Weights can specify a sum or a normalized average; they must be
reported when comparing spectra across heads, layers, or datasets.

\begin{theorem}[Attention-weighted value projection]
Let $\lambda_1\geq\cdots\geq\lambda_d\geq0$ be the eigenvalues of
$M_V$. A matrix $P$ spanning a top-$r$ eigenspace minimizes
\eqref{eq:objective}, and the minimum distortion is
\[
 \min_{P^\top P=I_r}D(P)=\sum_{i=r+1}^d\lambda_i.
\]
The optimum need not be unique when eigenvalues tie.
\end{theorem}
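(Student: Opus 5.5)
The plan is to turn $D(P)$ into a trace functional of $M_V$ and then apply Ky Fan's maximum principle. Write $Q=I-PP^\top$, which is an orthogonal projector, so $Q^\top=Q$ and $Q^2=Q$. For each term,
\[
\|\alpha_{s,h}V_sQ\|_F^2=\operatorname{tr}\bigl(Q V_s^\top\alpha_{s,h}^\top\alpha_{s,h}V_s Q\bigr)=\operatorname{tr}\bigl(V_s^\top\alpha_{s,h}^\top\alpha_{s,h}V_s Q\bigr),
\]
using cyclicity of the trace and $Q^2=Q$. Summing with the nonnegative weights $w_{s,h}$ and using linearity of the trace gives
\[
D(P)=\operatorname{tr}(M_VQ)=\operatorname{tr}(M_V)-\operatorname{tr}(P^\top M_VP).
\]
Since $\operatorname{tr}(M_V)=\sum_{i=1}^d\lambda_i$ does not depend on $P$, minimizing $D$ is equivalent to maximizing $\operatorname{tr}(P^\top M_VP)$ over $P^\top P=I_r$.

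Next, $M_V$ is symmetric positive semidefinite, because it is a nonnegative combination of Gram matrices $(\alpha V)^\top(\alpha V)$. So it has an orthonormal eigenbasis $u_1,\dots,u_d$ with eigenvalues $\lambda_1\ge\cdots\ge\lambda_d\ge0$, which justifies the ordering in the statement. Ky Fan's principle gives $\max_{P^\top P=I_r}\operatorname{tr}(P^\top M_VP)=\sum_{i\le r}\lambda_i$.

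To keep the argument self-contained, I would prove this inequality directly. Set $c_i=\|P^\top u_i\|^2$. Then $\operatorname{tr}(P^\top M_VP)=\sum_i\lambda_i c_i$, with $0\le c_i\le1$ (since $PP^\top$ is a projector) and $\sum_i c_i=\operatorname{tr}(PP^\top)=r$. A linear function with nonincreasing coefficients, maximized over this polytope, is at most $\sum_{i\le r}\lambda_i$. This follows from the exchange argument
\[
\sum_{i\le r}\lambda_i-\sum_i\lambda_ic_i=\sum_{i\le r}(1-c_i)\lambda_i-\sum_{i>r}c_i\lambda_i\ge\lambda_r\Bigl(\sum_{i\le r}(1-c_i)-\sum_{i>r}c_i\Bigr)=0.
\]
The same computation characterizes the equality case, which is where the most care is needed. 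Equality holds exactly when all the weight $c_i$ sits on indices with $\lambda_i\ge\lambda_r$, with full weight on those with $\lambda_i>\lambda_r$. This is precisely the condition that $\operatorname{range}(P)$ is a top-$r$ eigenspace.

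Finally, attainment and non-uniqueness. Taking $P=[u_1,\dots,u_r]$ attains the bound, so $\min D=\sum_{i>r}\lambda_i$. When $\lambda_r=\lambda_{r+1}$, any orthonormal basis drawn from the tied eigenspace also attains it. Such choices can span different subspaces, so the optimum need not be unique. This explains the final sentence of the theorem.
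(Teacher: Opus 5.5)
Your proposal is correct and follows the paper's own proof. Both use $Q^\top=Q$ and $Q^2=Q$ to rewrite $D(P)=\operatorname{tr}(M_V)-\operatorname{tr}(P^\top M_VP)$ and then apply the Rayleigh--Ritz/Ky Fan trace maximum. Your self-contained proof of that bound via $c_i=\|P^\top u_i\|^2$, and your equality analysis (which shows the optimal subspace is non-unique precisely when $\lambda_r=\lambda_{r+1}$), are correct additions that the paper simply delegates to the cited variational principle.
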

\begin{proof}
Put $R=I-PP^\top$. Since $R^\top=R$ and $R^2=R$,
\[
 D(P)=\operatorname{Tr}(RM_VR)=\operatorname{Tr}(M_VR)
 =\operatorname{Tr}(M_V)-\operatorname{Tr}(P^\top M_VP).
\]
The Rayleigh--Ritz variational principle gives the maximum of the
last trace as $\sum_{i=1}^r\lambda_i$, proving the result.
\end{proof}

For one sequence and one head, $M_V=V^\top\alpha^\top\alpha V$.
Equivalently, the solution uses the right singular vectors of
$\alpha V$. The eigenspace optimization is classical. Here it identifies the
subspace that preserves the chosen attention-output objective.

\subsection{A case where value energy selects the wrong direction}
The two objectives can disagree even in two dimensions. Consider
\[
 V=\begin{pmatrix}3&1\\-3&1\end{pmatrix},\qquad
 \alpha=\frac12\begin{pmatrix}1&1\\1&1\end{pmatrix}.
\]
This is a constructed example with unmasked uniform attention, not a
measurement from a causal language model. Here
\[
 V^\top V=\operatorname{diag}(18,2),\qquad
 M_V=\operatorname{diag}(0,2).
\]
Uncentered value PCA retains the first coordinate at rank one, losing
the entire attention output and incurring squared error $2$. The
attention-weighted basis retains the second coordinate and reconstructs
the output exactly. Attention averages away the first direction, even
though it contains most of the value energy. This example establishes
that the objectives can differ; it does not estimate how often they
differ in a model.

\begin{figure}[t]
\centering
\includegraphics[width=\linewidth]{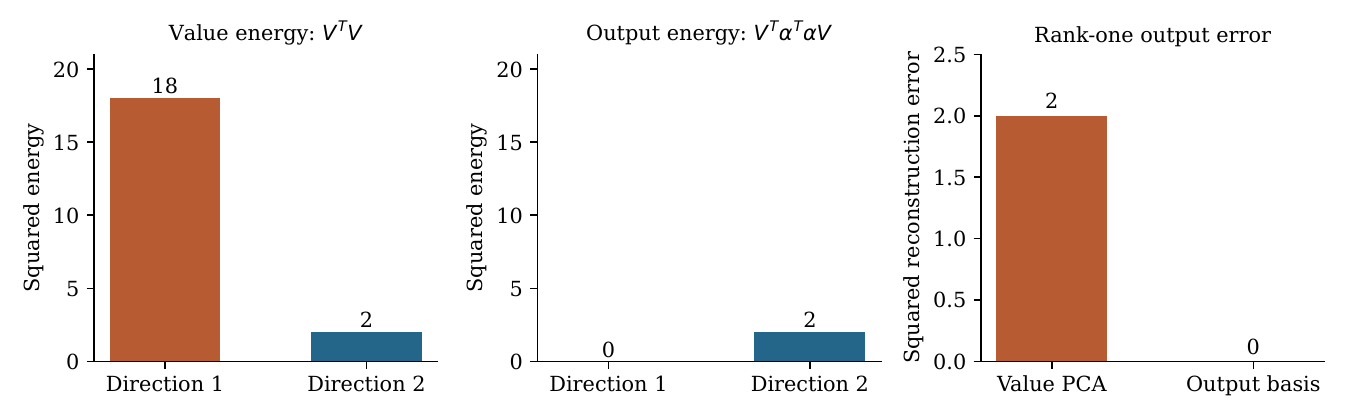}
\caption{An exact two-dimensional example. The direction with the most
value energy contributes nothing after uniform token averaging.
Choosing the basis from the attention output gives zero rank-one
output error. This constructed example uses unmasked attention.}
\label{fig:objective}
\end{figure}

\paragraph{GQA.}
Equation~\eqref{eq:gram} sums individual query-head Gram matrices.
Replacing that sum by the Gram matrix of the mean attention introduces
cross-head terms and generally changes the optimizer. 

\paragraph{Calibration and deployment.}
The theorem is exact for the given calibration inputs and attention
matrices. A population statement follows by taking expectations when
the required second moments exist. Empirical optimality does not
guarantee generalization to new tokens, contexts, or domains. A basis
fitted from a completed evaluation sequence is an oracle for that
sequence, not a static calibration result.

\paragraph{Output projection and layer interactions.}
The objective precedes the output projection $W^O$. In general,
minimizing $\|O(I-PP^\top)W^O\|_F^2$ is a different problem. For
example, let $O=\operatorname{diag}(10,1)$ and
$W^O=\operatorname{diag}(10^{-3},1)$. Retaining the first coordinate
minimizes local output error, but incurs squared error $1$ after
$W^O$; retaining the second incurs only $10^{-4}$ there. Thus the
local optimum need not minimize downstream error even for a linear
output map. Compressing values in earlier layers can also change the
queries and keys of later layers. Simultaneous multilayer projection
is not globally solved by independent applications of the theorem.

\paragraph{Biases and centering.}
For column-vector activations $v=W_vx+b_v$, projecting values requires
$\widehat v=\Pi W_vx+\Pi b_v$. Projecting only weights leaves an
unprojected bias. This distinction affects the GPT-2 weight interventions discussed
in Appendix~\ref{app:corrections}. An affine
reconstruction $\widehat V=\mathbf1\mu^\top+(V-\mathbf1\mu^\top)\Pi$
has the same theorem for the centered values with fixed $\mu$,
provided the mean is restored and accounted for. Uncentered second
moments must not be described as centered covariance without this
distinction.

\subsection{Rank allocation under an additive local objective}
For groups $g$ with fixed spectra and per-direction storage costs
$c_g$, one can optimize
\[
 \min_{\{r_g\}}\sum_g\sum_{i>r_g}\lambda_{g,i}
 \quad\text{subject to}\quad\sum_g c_g r_g\leq B.
\]
When all costs are equal, retaining the largest marginal eigenvalues
globally solves this additive objective; ties can be resolved to
preserve each group's spectral prefix. Unequal integer costs require
a constrained discrete allocation procedure, such as dynamic
programming; greedy sorting by $\lambda/c$ is not generally exact.
Basis and metadata costs, rank granularity, and cross-layer effects
must be included in any deployed allocation model. The allocation solves the stated sum of local errors. End-to-end
perplexity and mixed-precision allocation require different objectives.

\begin{corollary}[Equal-cost rank allocation]
Suppose each retained direction has the same cost, the budget permits
$B$ directions, and the objective is the sum of the fixed local
spectral tails. An optimal allocation retains the $B$ largest
eigenvalues across all groups, resolving ties to retain a prefix in
each group.
\end{corollary}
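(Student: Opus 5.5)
The plan is to turn the minimization of discarded tails into a maximization of retained eigenvalue sums, and then to argue by exchange. Since the total $\sum_g\sum_i\lambda_{g,i}$ is fixed, minimizing $\sum_g\sum_{i>r_g}\lambda_{g,i}$ is equivalent to maximizing $\sum_g\sum_{i\le r_g}\lambda_{g,i}$ subject to $\sum_g r_g\le B$ (equal costs normalized to one) and $0\le r_g\le d_g$. I will assume that $B\le\sum_g d_g$, since otherwise every direction can be kept and the claim is trivial. Each group's spectrum is sorted in nonincreasing order, as in the Theorem. Because each retained prefix has nonnegative eigenvalues, we may also assume the budget is used with equality.

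The first step is an upper bound. A feasible allocation $\{r_g\}$ selects a multiset $S$ of at most $B$ eigenvalues, namely the first $r_g$ entries of group $g$. Its retained sum therefore cannot exceed the sum of the $B$ largest entries of the pooled multiset $\{\lambda_{g,i}\}$. The second step is to show that this bound is attained by a prefix-respecting selection. Take any set $S^\star$ of $B$ largest pooled entries. Suppose that in some group $g$ an index $i$ is selected while some $j<i$ is not. Then $\lambda_{g,j}\ge\lambda_{g,i}$, so swapping $j$ into $S^\star$ and $i$ out of it keeps $S^\star$ among the top $B$. Specifically, either the value strictly increases, which contradicts maximality, or the two values tie. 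Repeating the swap decreases $\sum_{(g,i)\in S^\star}i$, so the process terminates at a selection that is a prefix in every group. Setting $r_g=|S^\star\cap\{g\}\times\mathbb N|$ then gives a feasible allocation whose retained sum equals the upper bound, which proves optimality.

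Finally, I will connect this to the Theorem. For each group, the minimal local error at rank $r_g$ is exactly $\sum_{i>r_g}\lambda_{g,i}$, and it is realized by a top-$r_g$ eigenspace. The allocation objective is therefore the true minimized sum of the local errors, and the ranks chosen above are achievable with orthogonal projectors. The main obstacle is the tie handling in the exchange step. A naive choice of the ``$B$ largest'' entries can pick a non-prefix set when values inside a group tie, and such a set does not correspond to any rank. The swap argument is exactly what resolves this, and it is the content of the clause ``resolving ties to retain a prefix in each group'' in the statement.
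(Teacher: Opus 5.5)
Your proposal is correct and follows the same route as the paper's proof. Both rewrite the total tail as the total eigenvalue sum minus the retained sum, bound the retained sum by the $B$ largest pooled eigenvalues, and show that this bound is attained by a prefix in each group; your exchange argument with the decreasing index-sum potential just makes rigorous the tie-handling step that the paper asserts in one sentence.
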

\begin{proof}
The total error equals the sum of all eigenvalues minus the sum of
retained eigenvalues. No choice of $B$ values can exceed the sum of the
$B$ largest. Since each group's eigenvalues are ordered, that choice
can satisfy the prefix constraint, including when values tie.
\end{proof}

\begin{figure}[t]
\centering
\includegraphics[width=.95\linewidth]{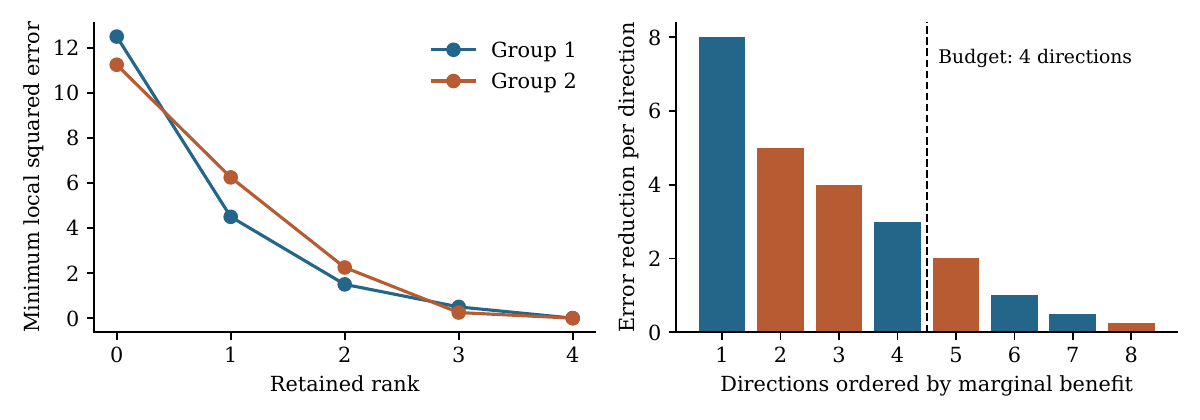}
\caption{The spectral tail gives the exact local rank--error curve
(left). With equal costs, sorting marginal eigenvalues allocates a
shared budget (right). The illustrative spectra are $(8,3,1,0.5)$ and
$(5,4,2,0.25)$; a budget of four directions retains two per group.
These are synthetic spectra, not measured model spectra.}
\label{fig:allocation}
\end{figure}

\subsection{From a basis to a cache representation}
Given a fitted basis $P$, the reduced representation is $Z=VP$.
The reconstructed attention output is
\[
 \widehat O=(\alpha Z)P^\top.
\]
Thus a value cache can store $r$ coefficients per token instead of $d$.
The basis is shared across tokens and must also be stored. If an output
projection follows, $P^\top W^O$ can be precomputed for the corresponding
head block. This identity specifies the representation; it does not
by itself provide a packed cache implementation or a faster kernel.

The Gram matrix can be accumulated from calibration outputs
$O_{s,h}=\alpha_{s,h}V_s$ as
$M_V=\sum_{s,h}w_{s,h}O_{s,h}^\top O_{s,h}$, followed by a symmetric
eigendecomposition. Calibration inputs, their weights, and the chosen
rank determine the fitted basis. Evaluation on separate sequences is
needed to determine how well that basis transfers.

\section{Local Error and Downstream Loss}
For $e=\operatorname{vec}(\Delta O)$, a task-loss expansion generally
contains $g^\top e+\tfrac12 e^\top H e$, as well as higher-order
terms. Training the model does not automatically make $g$ vanish
with respect to every internal activation. An alternative is the
KL divergence from the clean model's output distribution, whose
expansion at the clean output has no linear term. If its pullback
metric is approximated by a fixed $cI$, the quadratic surrogate is
proportional to the local reconstruction objective. This approximation needs to be justified for the model and data;
similar perplexities under two heuristics do not establish it.

\subsection{Local logit perturbations}
For $p=\operatorname{softmax}(s)$ and small $\delta$,
\[
 \mathrm{KL}(p\,\|\,\operatorname{softmax}(s+\delta))
 =\tfrac12\delta^\top G\delta+O(\|\delta\|^3),
 \qquad G=\operatorname{diag}(p)-pp^\top.
\]
Along a fixed direction $u$, a deterministic perturbation $a u$
has quadratic cost $a^2u^\top Gu/2$. Idealized zero-mean uniform
rounding noise of width $\Delta$ has expected quadratic cost
$\Delta^2u^\top Gu/24$. When $u^\top Gu>0$, their ratio is
$12a^2/\Delta^2$. Choosing $\Delta=2a/2^b$ gives $3\,2^{2b}$
under that particular convention. It is not a universal bound or
a matched-total-storage comparison. Actual ranges, clipping,
signal-dependent rounding, and finite deletion errors require
separate analysis. The ratio is undefined when both quadratic
terms vanish. Softmax is continuous: a change in its argmax can
occur with an arbitrarily small change in probabilities.

\section{Experiments}
\label{sec:audit}
\paragraph{Setup and interpretation.}
For column-vector conventions, the inspected weight-rounding
intervention is of the form
\[
 \widetilde W_v=U\mathcal Q_b(U^\top W_v),\qquad
 \widetilde v=\widetilde W_vx+b_v.
\]
Here $U$ is an orthogonal basis, and $\mathcal Q_b$ rounds coefficients
and immediately reconstructs floating-point values. A cache codec
would instead encode value activations, for example
$\widehat v=U\mathcal Q_b(U^\top v)$ with a specified scale and
packing scheme. In general $\mathcal Q_b(W)x\ne\mathcal Q_b(Wx)$.
The inspected main scripts restore dense floating-point weights and
do not implement a packed $b$-bit value cache.

Linear projection can be absorbed into weights, but dense projected
weights still do not implement reduced cache storage. A rank-$r$
codec must actually store the $r$-dimensional coefficients and
provide the necessary reconstruction or fused attention operations.
The earlier nominal $db$ versus $16r$ budget columns therefore
cannot establish equal physical cache memory for the experiments.

\begin{table}[t]
\centering\small
\caption{Selected historical WikiText-2 PPL measurements. ``Rounded''
means projection-weight rounding and floating-point reconstruction;
``rank'' means a value-weight projection intervention. These are
reported records, not new reproductions or matched-cache-byte results.}
\label{tab:historical}
\begin{tabular}{@{}lrrrr@{}}
\toprule
Model & Rank & Rank PPL & Rounding bits & Rounded PPL\\
\midrule
GPT-2 & 16 & 96.71 & 4 & 44.57\\
GPT-2 & 32 & 56.30 & 8 & 44.00\\
Llama-3.2 3B & 32 & 69.10 & 4 & 13.61\\
Llama-3.2 3B & 64 & 21.50 & 8 & 13.22\\
Mistral 7B & 32 & 44.19 & 4 & 9.42\\
Mistral 7B & 64 & 13.07 & 8 & 9.19\\
\bottomrule
\end{tabular}
\end{table}

\begin{figure}[t]
\centering
\includegraphics[width=\linewidth]{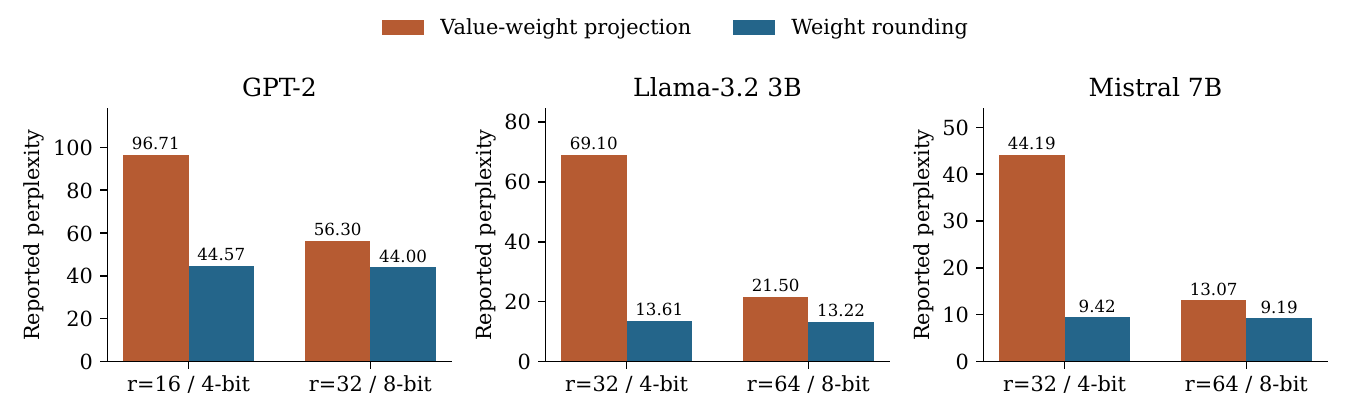}
\caption{Historical weight-intervention results from
Table~\ref{tab:historical}. Labels pair a projection rank with a
rounding precision; they do not denote equal measured cache storage.
Both methods reconstruct dense floating-point weights in the inspected
implementations. Values are reported records, not new reproductions.}
\label{fig:historical}
\end{figure}

\paragraph{Results.}
The reported perplexities are lower for coefficient rounding than for
the value-weight projections in every retained row of
Table~\ref{tab:historical}. The difference is large at the smaller
ranks: Mistral rank-32 gives 44.19 PPL, compared with 9.42 for four-bit
weight rounding. This motivates asking whether retaining more
dimensions at lower precision also helps when compressing the cache
itself. The table does not answer that question because the
interventions do not implement the corresponding cache representations.

The retained records cover GPT-2, Llama-3.2 3B, and Mistral 7B. Both
arms modify V weights in the inspected main scripts. They are
historical measurements, not new reproductions: complete per-example
outputs and executed-source manifests have not been recovered.
Other tables and downstream accuracy claims from v1 are excluded
pending consistent evaluation and provenance checks. The specific
protocol issues are recorded in Appendix~\ref{app:corrections}.

\section{Discussion}
A direct comparison requires applying both methods to actual K/V activations
under one causal protocol, using a common uncompressed model and
tokenizer. The experiment should record model revisions, data splits and sample
identities, calibration choices, per-example losses, and the
quantizer's grouping and scale rules. Basis fitting, rank or bit
selection, and final evaluation should use separate data. A
full-rank projection control must reproduce the baseline, including
bias handling and positional encoding.

Storage must include packed codes, scales, zero points where used,
projection bases, retained full-precision tokens, and other cache
metadata, with shared costs and temporary workspace reported
separately. A strong low-rank baseline and a documented cache
quantizer should be compared at equal accounted storage or on a
complete quality--storage frontier. Paired quality estimates should
accompany aggregate results. Long-context retrieval and serving
measurements are necessary before extending conclusions to those
settings. The present results do not measure serving performance.

\section{Conclusion}
For fixed attention weights, value projection is a spectral problem:
retain the leading directions of $V^\top\alpha^\top\alpha V$ and the
remaining eigenvalues give the exact local error. This provides a
basis for calibration and rank selection, with a clear boundary at
the local attention output. It does not determine downstream loss or
settle the choice between rank reduction and quantization. That
comparison remains an experimental question about actual cache
representations, their storage costs, and their effect on predictions.

\appendix
\section{Changes from the Original Version}
\label{app:corrections}
The original version extended the attention-specific result to general
downstream maps and interpreted weight-rounding experiments as
matched-storage cache comparisons. Both claims have been removed.
The V theorem is retained with a fixed-attention, orthogonal-projection,
local-output objective. The title and abstract now reflect that scope.
The following details explain the theoretical and experimental changes.

\subsection{Why the general theorem fails}
For differentiable $f$ and $R=I-PP^\top$, linearizing the output gives
the squared-error surrogate
\[
 D_{\rm lin}(P)=\mathbb E[x^\top R A(x)R x],
 \qquad A(x)=W^\top J_f(Wx)^\top J_f(Wx)W.
\]
Both copies of $R$ must remain. In general this is not
$\operatorname{Tr}(R\,\mathbb E[A(x)xx^\top])$; the proposed matrix
need not even be symmetric. With
\[
 R=\begin{pmatrix}1&0\\0&0\end{pmatrix},\quad
 A=\begin{pmatrix}1&1\\1&2\end{pmatrix},\quad x=\begin{pmatrix}1\\1\end{pmatrix},
\]
the two expressions are $1$ and $2$, respectively. The error occurs
even with constant positive-definite $A$, hence even for linear
downstream maps. A sufficient special case is constant $A$ and
$\mathbb E[xx^\top]=cI$, which yields $c\operatorname{Tr}(AR)$.
Linearity alone is insufficient. The claimed general unification of
other compression methods is consequently removed.

\subsection{Projection implementation and precision}
The inspected main GPT-2, Llama, and RoPE-corrected Mistral
comparisons modify V weights only on both sides. The earlier setup
describing K-PCA alongside every main rank-reduction comparison was
incorrect. GPT-2 is loaded at the library's default floating-point
precision in its script; the Mistral script explicitly requests
FP16 despite printing an ``FP32'' baseline label. A blanket precision
claim across these experiments is not justified.

The RoPE-corrected Mistral calibration script sums individual query-head
Grams, as required by Eq.~\eqref{eq:gram}. The earlier instruction to
average attention matrices before forming the Gram was incorrect.

\subsection{Joint K/V result and total-cache accounting}
The PE-01 script rounds \texttt{k\_proj} and \texttt{v\_proj} weights.
Its reported baseline of 9.19 and joint four-bit weight-rounding
PPL of 9.37 differ by $+0.18$. These numbers do not measure a joint
four-bit activation cache or its memory reduction. The same record
reports V-only weight rounding at 9.33; this differs from the
eigenbasis experiment at 9.42. Differences between these configurations
must not be attributed solely to adding key compression.

For a hypothetical packed cache with equal key/value dimension $d$,
FP16 K and V occupy $32d$ payload bits per token and KV head. FP16 K
plus INT4 V occupies $20d$ bits, a 37.5\% total reduction; INT4 K
plus INT4 V occupies $8d$ bits, a 75\% total reduction. These are
payload calculations, excluding scales, bases, residual windows,
alignment, and workspace. They are not measured savings from PE-01.

\subsection{Activation-rounding checks and causality}
Separate validation scripts round V activations in forward hooks.
Their scales use the maximum over the complete sequence (and batch,
when applicable), after which they return reconstructed floating-point
activations. These are distinct from the weight experiments and do
not implement packed cache storage. In teacher-forced next-token
evaluation, a complete-sequence scale can depend on future tokens,
so the resulting predictions need not obey the intended causal
protocol. A causal rerun must specify calibration-fixed scales,
past-only updates with a consistent re-encoding policy, or another
explicit streaming codec. This issue differs from compressing an
already observed prompt before generating its continuation.

\subsection{LAMBADA and split reconciliation}
The newer internal LAMBADA record reports GPT-2 baseline accuracy
7.4\% and rounded-weight accuracy 7.2\%; the older appendix baseline
24.2\% comes from a different evaluator. Mistral's corresponding
record reports 22.8\% and 23.0\%. Mixing those protocols was incorrect,
but merely substituting baselines would not repair the evaluation.
The newer evaluator constructs the context by collapsing whitespace,
then uses that context's token count to locate the target in the
independently tokenized original passage. It does not verify that
the context tokens are a prefix of the full tokens. This can misalign
the target. Skipped examples and normalization also differ between
evaluators. All LAMBADA parity and collapse claims are removed
pending a consistent evaluator and per-example records; no claim of
statistical equivalence is made from aggregate percentages.

The historical Mistral seed-summary result, rank-32 PPL
$28.83\pm0.86$, uses a different evaluation split with uncompressed
baseline 9.81, rather than the main experiment's 9.19. It must not
be interchanged with 44.19. Dividing a performance gap by one arm's
standard deviation is not a statistical significance test. Moreover,
the inspected routing script computes dispersion across eight
sequences at one seed, so its error bars cannot be relabeled as
three-seed variation without a separate supporting run.

\section{Audit provenance and availability}
The correction was informed by inspection of local source files and
research records listed below. Paths identify the audit trail; they
are not public download links or proof that a particular source
revision generated a reported number. The historical raw results
are not yet accompanied here by a complete public reproduction
package. No new external code/data release is claimed.
\begin{itemize}
\item Main GPT-2 interventions and activation hooks:
\path{notebooks/theory01_math/exp26d_gpt2_validation.py}.
\item Llama interventions and head aggregation:
\path{notebooks/theory01_math/exp26c_llama3_validation.py}.
\item RoPE-corrected Mistral and routing dispersion:
\path{notebooks/theory01_math/exp_audit_fixes.py}.
\item Joint weight rounding:
\path{notebooks/paper_experiments/pe01_k_quantization.py}.
\item Newer downstream evaluators:
\path{notebooks/theory01_math/exp_final_lambada_timing.py} and
\path{notebooks/theory01_math/exp_final_mistral_lambada.py}.
\item Older downstream evaluator:
\path{notebooks/kv95_lambada_fixed.py}.
\item Historical numerical summary and split descriptions:
\path{docs/ALL-RESULTS-FOR-PAPER.md}.
\end{itemize}

\section{Other interpretations removed from the original version}
Exploratory failures of selected algorithms do not demonstrate that
all rank-reduction methods fail. A small basis-ablation spread does
not prove basis independence across quantizers and models. A
Spearman correlation of 0.38 does not mean that 62\% of a ranking is
unexplained; a Pearson correlation of 0.69 does not imply 31\%
unexplained variance. Equal diagonal sensitivities in one basis do
not establish an isotropic full metric. Subspace overlap is not
statistical correlation. Keeping a high fraction of representation
energy does not by itself guarantee preservation of predictions,
but neither does a failed projection prove that variance is
unrelated to importance. These distinctions motivate removing the
corresponding universal and causal interpretations from this paper.


\clearpage
\begin{thebibliography}{9}
\bibitem{kivi} Z. Liu et al.
KIVI: A Tuning-Free Asymmetric 2bit Quantization for KV Cache.
ICML, 2024. \url{https://arxiv.org/abs/2402.02750}.
\bibitem{palu} C.-C. Chang et al.
Palu: Compressing KV-Cache with Low-Rank Projection.
2024. \url{https://arxiv.org/abs/2407.21118}.
\bibitem{cskv} L. Wang et al.
CSKV: Training-Efficient Channel Shrinking for KV Cache in
Long-Context Scenarios. 2024.
\url{https://arxiv.org/abs/2409.10593}.
\bibitem{recalkv} X. Yan et al.
ReCalKV: Low-Rank KV Cache Compression via Head
Reordering and Offline Calibration. 2025.
\url{https://arxiv.org/abs/2505.24357}.
\bibitem{kqsvd} D. Lesens, B. T. Rakhshan, and G. Rabusseau.
KQ-SVD: Compressing the KV Cache with Provable Guarantees on
Attention Fidelity. 2025. \url{https://arxiv.org/abs/2512.05916}.
\end{thebibliography}
\end{document}